\DeclareMathOperator*{\argmax}{arg\,max}
\DeclareRobustCommand{\MTSCDQN}{\mbox{CDQN }}
\newtheorem{theorem}{Theorem}
\newtheorem{proposition}{Proposition}
\newcommand{\printfnsymbol}[1]{%
  \textsuperscript{\@fnsymbol{#1}}%
}
\title{Deep Constrained Q-learning}
\author{
  Gabriel Kalweit\textsuperscript{\rm 1}\thanks{Equal Contribution.}, Maria Huegle\textsuperscript{\rm 1}\printfnsymbol{1}, Moritz Werling\textsuperscript{\rm 2}, Joschka Boedecker\textsuperscript{\rm 1,3}\\
}
\begin{document}
\maketitle

\begin{abstract}
In many real world applications, reinforcement learning agents have to optimize multiple objectives while following certain rules or satisfying a list of constraints. Classical methods based on reward shaping, i.e. a weighted combination of different objectives in the reward signal, or Lagrangian methods, including constraints in the loss function, have no guarantees that the agent satisfies the constraints at all points in time and can lead to undesired behavior. When a discrete policy is extracted from an action-value function, safe actions can be ensured by restricting the action space at maximization, but can lead to sub-optimal solutions among feasible alternatives. In this work, we propose Constrained Q-learning, a novel off-policy reinforcement learning framework restricting the action space directly in the Q-update to learn the optimal Q-function for the induced constrained MDP and the corresponding safe policy. In addition to single-step constraints referring only to the next action, we introduce a formulation for approximate multi-step constraints under the current target policy based on truncated value-functions. We analyze the advantages of Constrained Q-learning in the tabular case and compare Constrained DQN to reward shaping and Lagrangian methods in the application of high-level decision making in autonomous driving, considering constraints for safety, keeping right and comfort. We train our agent in the open-source simulator SUMO and on the real HighD data set.
\end{abstract}

\section{Introduction}
Deep reinforcement learning algorithms have achieved state-of-the-art performance in many domains in recent years \cite{DBLP:journals/nature/MnihKSRVBGRFOPB15, DBLP:journals/nature/SilverHMGSDSAPL16, DBLP:conf/nips/WatterSBR15, DBLP:journals/jmlr/LevineFDA16}. The goal for a reinforcement learning (RL) agent is to maximize the expected accumulated reward which it collects while interacting with its environment. However, in contrast to commonly used simulated benchmarks like computer games \cite{DBLP:journals/corr/abs-1207-4708} or MuJoCo environments \cite{6386109}, in real-world applications the reward signal is not pre-defined and has to be hand-engineered. Formulating an immediate reward function such that the outcome of the training process is consistent with the goals of the task designer can be very hard though, especially in cases where different objectives have to be combined. Nonetheless, it is crucial for many safety-critical tasks such as autonomous driving, warehouse logistics or assistance in health, amongst others. One way to approach this problem is to use a weighted sum in the immediate reward function, commonly known as \textit{reward shaping} (RS), and apply classical RL algorithms such as DQN \cite{DBLP:journals/nature/MnihKSRVBGRFOPB15} directly without further modifications. In practice, finding the suitable coefficients for the different objectives requires prior knowledge about the task domain or hyperparameter optimization which can be very time consuming. Other, more sophisticated multi-objective approaches \cite{MannorGeometric, Pirotta2014MultiobjectiveRL, 6889732} use multiple reward signals and value-functions and try to find Pareto-optimal solutions, i.e. solutions that cannot be improved in at least one objective. Picking one of the Pareto-optimal solutions for execution is, however, non-trivial. Another common approach to ensure consistency with constraints in Q-learning \cite{Watkins92q-learning}, referred to as \textit{Safe Policy Extraction} (SPE) in the following, is to restrict the action space during policy extraction \cite{Mirchevska2018HighlevelDM, tactical_decision_making}, masking out all actions leading to constraint violations. As we show in this work, however, this approach can lead to non-optimal policies under the given set of constraints.
\begin{figure*}
    \centering
   \includegraphics[width=0.8\textwidth]{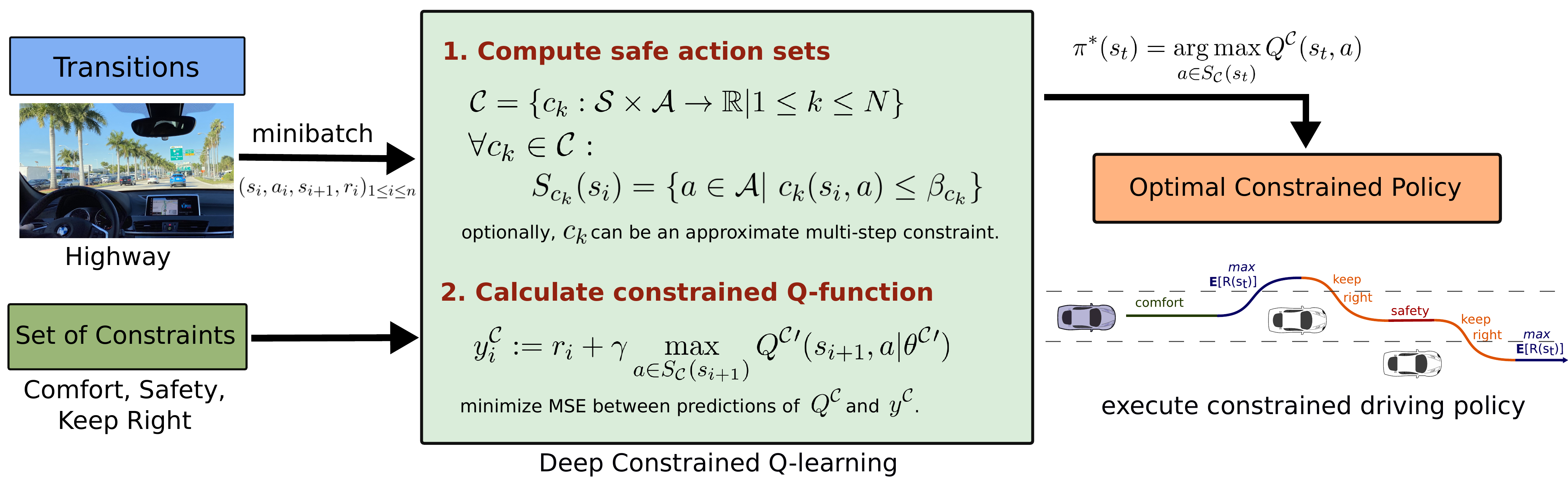}
    \caption{Scheme of Deep Constrained Q-learning for the application of autonomous lane changes. The agent receives a set of transitions and an accompanying set of constraints. For its update, the safe action set for the next state is retrieved in order to restrict the search space over value functions by action space shaping.}
    \label{fig:motivsumo}
\end{figure*}
Notably, in many applications there is one primary objective (e.g. driving as close as possible to a desired velocity) to be optimized, while additional auxiliary costs are used to guide the agent and ensure various side-constraints (e.g. avoid crashes or guarantee comfort). An exemplary setup with multiple objectives can be seen in \Cref{fig:motivsumo}. A common formulation for reinforcement learning with constraints is the constrained Markov Decision Process (CMDP) framework \cite{altman1999constrained}, where instead of a weighted combination of the different objectives, agents are optimizing one objective while  satisfying constraints on expectations of auxiliary costs. We propose a novel Q-learning algorithm that satisfies a list of single-step and multi-step constraints, where we model multi-step constraints as expectations of auxiliary costs as in the CMDP framework. These multi-step constraints, however, are estimated via truncated value-functions \cite{composite_q}, to approximate constraint costs over the next $H$ steps following the current target-policy. The benefit of this formulation is that constraints are independent of the scaling of the immediate reward function and can act on different time scales which allows for an easier and more general formulation of constraints. Further, this new class of constraint formulations does not require discounting in the multi-step case. This leads to a much more intuitive way to formulate proper constraints for the agent, e.g. constraints such as performing less than two lane changes in $\SI{10}{\second}$  or performing lane-changes only if the velocity gain in the next  $\SI{10}{\second}$ exceeds $\SI{0.25}{\metre\per\second}$. To formulate a corresponding constraint based on discounted value-functions, the discount value has to be set precisely to cover a time frame of $\SI{10}{\second}$, along with immediate penalties which in sum have to represent a total of two lane changes or an increase in velocity of $\SI{0.25}{\metre\per\second}$.\\

Our contributions are threefold. First, we define an extension of the update in Q-learning which modifies the action selection of the maximization step to ensure an optimal policy with constraint satisfaction in the long-term, an algorithm we call \textit{Constrained Q-learning}. We further show that the constrained update leads to the optimal deterministic policy for the case of Constrained Policy Iteration and show that Constrained Q-learning can lead to a drastic reduction of the search space in the tabular case compared to reward shaping. Second, we introduce a new class of multi-step constraints which refer to the current target policy. Third, we employ Constrained Q-learning within DQN and evaluate its performance in high-level decision making for autonomous driving. We show that Constrained DQN (CDQN) is able to outperform reward shaping, Safe Policy Extraction and Lagrangian optimization techniques and further use the open HighD data set \cite{highDdataset}, containing 147 hours of top-down recordings of German highways, to learn a smooth and anticipatory driving policy satisfying traffic rules to further highlight the real-world applicability of Constrained Q-learning.

\section{Related Work}
\begin{table}[t]
    \centering
    \caption{Overview of possible constraint types in $^{*}$\cite{DBLP:journals/corr/abs-1805-11074}, $^{**}$\cite{Dalal} and $^{***}$\cite{achiam2017constrained}. Comparison to CDQN (ours) and reward shaping (RS).}
    \begin{tabular}{ccccc}
         \toprule
         & Discounted&Mean&\thead{Reward\\Agnostic}&\thead{Separable\\Time Scale}\\
         \midrule
         RCPO$^{*}$ &$\checkmark$&$\checkmark$&$\checkmark$&$\times$\\
         SL$^{**}$&$\times$&$\times$&$\checkmark$&$\times$\\
         CPO$^{***}$&$\checkmark$&$\times$&$\checkmark$&$\times$\\
         \thead{RS}&$\checkmark$&$\checkmark$&$\times$&$\times$\\
         \midrule
         \textbf{CDQN}&$\checkmark$&$\checkmark$&$\checkmark$&$\checkmark$\\
         \bottomrule
    \end{tabular}
    \label{tab:comp}
\end{table}

A plethora of work exists to find solutions for CMDPs, most of them belonging to (1) Trust region methods \cite{achiam2017constrained} or (2) Lagrange multiplier methods \cite{Bertsekas/99, BorkarActorCritic, Bhatnagar2012}, where the CMDP is converted into an equivalent unconstrained problem by making infeasible solutions sub-optimal. However, these methods only guarantee near-constraint satisfaction at each iteration. In Reward Constraint Policy Optimization (RCPO), constraints are represented by reward penalties which are added to the immediate reward function via optimized Lagrange multipliers \cite{DBLP:journals/corr/abs-1805-11074}. Since the approach optimizes both long-term reward and long-term penalty simultaneously, no clear distinction between return and constraint violation can be formalized. This stands in contrast to our work, where return and constraints are decoupled and can act on different time-scales which enhances interpretability of the learned behaviour. Put differently, our approach provides the possibility to \textit{formulate} constraints w.r.t. a shorter horizon, but \textit{optimizes} satisfaction of these constraints on the long-term, as shown in \Cref{fig:mbmf} for the exemplary application of autonomous driving. Further, RCPO is an on-policy method, whereas our approach belongs to the family of off-policy Q-learning algorithms. Off-policy RL algorithms also have been combined with SPE in \cite{Mirchevska2018HighlevelDM, tactical_decision_making, Dalal}. However, as we show below, SPE is not guaranteed to yield the optimal action given the constrained MDP, which stands in contrast to our work. See \Cref{tab:comp} for an overview of the different approaches. Robust control methods are able to model constraints on the short-term horizon and ensure their long-term satisfaction through constraints on terminal costs \cite{torsten1, torsten2, torsten3}. However, they rely on accurate models of the environment. Our approach combines the intuitive formulation of constraints on the short-term horizon as in model-based approaches with the robustness of a model-free RL method for the long-term optimization.

\begin{figure}[h]
\centering
  \includegraphics[width=0.45\textwidth, trim={0 0.3cm 0 0}, clip]{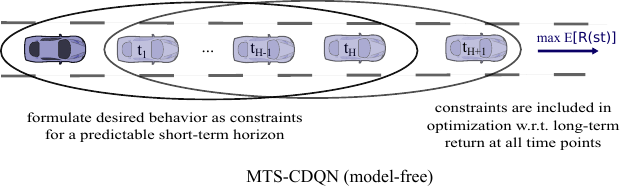} 
  \caption{Methodology of Constrained Q-learning, exemplary for the application of autonomous driving. Traffic rules are ensured in predictable short-term horizon. Long-term goals are optimized by optimization of long-term return.}
  \label{fig:mbmf}
\end{figure}
\section{Preliminaries}
In this section, we define the theoretical background.
\subsection{Markov Decision Processes (MDP)}
In a reinforcement learning setting, an agent interacts with an environment, which is typically modeled as an MDP $\langle\mathcal{S}, \mathcal{A}, \mathcal{P}, r, \gamma\rangle$. The agent is following policy $\pi : \mathcal{S} \rightarrow \mathcal{A}$ in some state $s_t$, applying a discrete action $a_t \sim \pi$ to reach a successor state $s_{t+1}\sim\mathcal{P}$ according to a transition model $\mathcal{P}$. In every discrete time step $t$, the agent receives reward $r_{t}$ for selecting action $a_t$  in state $s_t$. The goal of the agent is to maximize the expectation of the discounted long-term return $\mathbf{E}_{a_i\sim\pi, s_{i>t}\sim\mathcal{P}}[R(s_t)] = \mathbf{E}_{a_i\sim\pi, s_{i>t}\sim\mathcal{P}}[\sum_{i\geq t} \gamma^{i-t}r_i]$, where $\gamma \in [0, 1]$ is the discount factor. The action-value function $Q^\pi(s_t,a_t)=\mathbf{E}_{a_{i>t}\sim\pi, s_{i>t}\sim\mathcal{P}}[R(s_t)|a_t]$ represents the value of following a policy $\pi$  after applying action $a_t$. The optimal policy can be inferred from the optimal action-value function $Q^*(s_t, a_t) = \max_{\pi} Q^{\pi}(s, a)$ by maximization.

\subsection{Constrained Markov Decision Processes (CMDP)}
We consider a  CMDP $\langle\mathcal{S}, \mathcal{A}, \mathcal{P}, r, \gamma, \mathcal{C}\rangle$, with constraint set $\mathcal{C}$, where $ \mathcal{C}  = \{ c_k:\mathcal{S} \times \mathcal{A} \rightarrow \mathbb{R} | 1 \le k \le N \}$. We define the set of safe actions for a constraint $c_k \in \mathcal{C}$ as $S_{c_k}(s_t) = \{a \in \mathcal{A} | \  c_k(s_t, a) \le \beta_{c_k} \}$. We define $S_\mathcal{C}(s_t)$ as the intersection of all safe sets.

\subsection{Safe Policy Extraction}

Given an action-value function $Q$ and a set of constraints $\mathcal{C}$, we can extract the optimal safe policy $\pi$ w.r.t. $Q$ by $\pi(s_t) = \argmax_{a\in S_\mathcal{C}(s_t)}Q(s_t, a)$. We call this method \textit{Safe Policy Extraction}, abbreviated by \textit{SPE}.

\begin{figure}[t]
  \resizebox{0.45\textwidth}{!}{%
\begin{tikzpicture}[->,>=stealth',shorten >=1pt,auto,node distance=2.8cm,
                    semithick]
  \tikzstyle{every state}=[]

  \node[state] (0) {$s_0$};
  
  \node[state] (1) [right=1.5cm of 0, minimum width=1cm] {$s_1$};
  \node       (d1) [above=0.5cm of 1, minimum width=1cm] {};
  \node       (d2) [below=0.5cm of 1, minimum width=1cm] {};
  
  \node       (d3) [right=1.5cm of 1, minimum width=1cm] {};
  \node[state] (2) [right=1.5cm of d1, minimum width=1cm] {$s_2$};
  \node[state] (3) [right=1.5cm of d2, minimum width=1cm] {$s_3$};

  \node       (d4) [right=1.5cm of d3, minimum width=1cm] {};
  \node[state] (4) [right=1.5cm of 2, minimum width=1cm] {$s_4$};
  \node[state] (5) [right=1.5cm of 3, minimum width=1cm] {$s_5$};

  \node[state] (7) [right=1.5cm of d4, minimum width=1cm] {$s_7$};
  \node[state] (6) [right=1.5cm of 4, minimum width=1cm, dashed] {$s_6$};
  \node[state] (8) [right=1.5cm of 5, minimum width=1cm] {$s_8$};

  \node[state] (9) [right=1.5cm of 6, minimum width=1cm] {};
  \node[state] (9i) [right=1.5cm of 6, minimum width=0.75cm, xshift=0.0625cm] {$s_9$};
  \node[state] (10) [right=1.5cm of 7, minimum width=1cm] {};
  \node[state] (10i) [right=1.5cm of 7, minimum width=0.75cm, xshift=0.0625cm] {$s_{10}$};
  \node[state] (11) [right=1.5cm of 8, minimum width=1cm] {};
  \node[state] (11i) [right=1.5cm of 8, minimum width=0.75cm, xshift=0.0625cm] {$s_{11}$};

  \path (0) edge [below, very thick] node {$+0$} (1)
        (1) edge [very thick, sloped, anchor=center, below] node {$a,+0$} (2)
        (1) edge [very thick, sloped, anchor=center, below] node {$b,+0$} (3)
        (2) edge [very thick, below] node {$+0$} (4)
        (4) edge [very thick] node {$a, +0$} (6)
        (6) edge [very thick] node {$+3$} (9)
        (4) edge [very thick, sloped, anchor=center, below] node {$b,+0$} (7)
        (7) edge [very thick, below] node {$+1$} (10)
        (3) edge [very thick, below] node {$+0$} (5)
        (5) edge [very thick, below] node {$+0$} (8)
        (8) edge [very thick, below] node {$+2$} (11);
        
    \draw[red, dashed, very thick] ($(0)+(0,0.6)$) -- ($(1)+(0,0.6)$) -- ($(2)+(0,0.6)$) --
    ($(4)+(0,0.6)$) --
    ($(6)+(0,0.6)$) --
    ($(9)+(0,0.6)$);
    \draw[blue, dashed, very thick] ($(0)+(0,0.4)$) -- ($(1)+(0,0.4)$) -- ($(2)+(0,0.4)$) --
    ($(4)+(0,0.4)$) --
    ($(7)+(0,0.4)$) --
    ($(10)+(0,0.4)$);
    \draw[green, dashed, very thick] ($(0)+(0,0.2)$) -- ($(1)+(0,0.2)$) -- ($(3)+(0,0.2)$) --
    ($(5)+(0,0.2)$) --
    ($(8)+(0,0.2)$) --
    ($(11)+(0,0.2)$);
\end{tikzpicture}
} 
  \caption{ In this MDP, state $s_6$ is marked as unsafe. The initial state is $s_0$. Unconstrained Q-learning (red) chooses the unsafe path leading to $s_9$ with a return of $+3$ and Safe Policy Extraction (blue) after Q-learning leads to a safe path to state $s_{10}$ with a return of $+1$. Constrained Q-learning (green) chooses the safe path to $s_{11}$ with a return of $+2$.}
  \label{fig:motivation}
\end{figure}
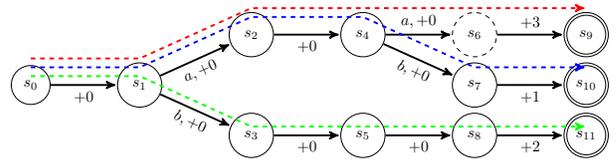

\begin{proposition}
Given an MDP $\mathcal{M}$ and set of constraints $\mathcal{C}$, SPE after Q-learning is not guaranteed to give the optimal safe policy for the induced constrained MDP $\mathcal{M_\mathcal{C}}$.
\end{proposition}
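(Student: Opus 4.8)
The plan is to prove the statement by exhibiting an explicit counterexample --- precisely the MDP $\mathcal{M}$ shown in \Cref{fig:motivation} --- on which the safe policy extracted from the \emph{unconstrained} optimal action-value function is strictly suboptimal for the induced CMDP $\mathcal{M}_{\mathcal{C}}$. Since the claim is a non-existence statement (``not guaranteed''), a single well-chosen instance suffices.

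First I would pin down the constraint set so that \Cref{fig:motivation} is faithfully encoded: take $\mathcal{C} = \{c\}$ with threshold $\beta_c = 0$, set $c(s_4, a) = 1$ for the action $a$ at $s_4$ that leads to the dashed state $s_6$, and $c(s, a') = 0$ for every other state-action pair. Then $S_\mathcal{C}(s_4) = \{b\}$ while $S_\mathcal{C}(s) = \mathcal{A}$ for all other states, which is exactly the situation drawn in the figure where $s_6$ is marked unsafe.

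Second I would compute $Q^*$ for $\mathcal{M}$ by backward induction: transitions are deterministic and, taking $\gamma = 1$, the value of a state is the sum of the edge labels along the best path from it. The only choice that matters is at $s_1$: via $a$ the best continuation $s_2 \to s_4 \to s_6 \to s_9$ accumulates $3$, while via $b$ the unique continuation $s_3 \to s_5 \to s_8 \to s_{11}$ accumulates $2$, so $Q^*(s_1, a) = 3 > 2 = Q^*(s_1, b)$. Then I would trace SPE against this $Q^*$: at $s_0$ and $s_1$ every action is safe, so SPE follows the greedy action $a$ at $s_1$ and lands at $s_4$; there only $b$ is safe, so SPE is forced onto $b$ and terminates at $s_{10}$ with total return $1$. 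The policy that instead picks $b$ already at $s_1$ is feasible (it never uses the forbidden action) and reaches $s_{11}$ with return $2$, so the optimal safe policy of $\mathcal{M}_{\mathcal{C}}$ has value $2$. Hence SPE-after-Q-learning returns a policy of value $1 < 2$ and is not optimal, which is the claim.

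The point I would emphasize, beyond the bookkeeping, is \emph{why} the failure occurs, and hence why no repair of the extraction step alone can fix it: $Q^*$ evaluates each action under the assumption that all \emph{subsequent} actions follow the unconstrained greedy policy, so the number $Q^*(s_1, a) = 3$ tacitly relies on taking the forbidden action at $s_4$ and overstates what a safe agent can achieve after $a$. Masking only at extraction time never propagates the restriction backward through the Bellman recursion, so the comparison made at $s_1$ is between incommensurable quantities. I do not anticipate a genuine obstacle; the only care needed is to choose $\mathcal{C}$ so that exactly one action becomes unsafe and to verify the value ordering at the branch state $s_1$, which is immediate from the edge labels. This is exactly the gap that Constrained Q-learning is designed to close by restricting the maximization inside the update itself.
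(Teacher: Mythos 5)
Your proposal is correct and follows exactly the paper's own argument, which is to invoke the counterexample of \Cref{fig:motivation}: unconstrained $Q^*$ prefers action $a$ at $s_1$ (value $3$, attained only via the forbidden state $s_6$), so SPE is trapped on the path to $s_{10}$ with return $1$, while the optimal safe policy takes $b$ at $s_1$ and earns $2$. You simply make explicit the constraint encoding, the backward-induction values, and the reason masking at extraction time cannot propagate the restriction through the Bellman recursion --- all of which the paper leaves implicit in the figure caption.
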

\begin{proof}
Follows from the counter example in \Cref{fig:motivation}.
\end{proof}

\section{Constrained Q-learning}
\label{sec:ltcq}
We extend the Q-learning update to use a set of constraints $\mathcal{C}$ with corresponding safe action set $S_{\mathcal{C}}(s_{t+1})$:
\begin{equation}
    \begin{split}
        Q^\mathcal{C}(s_t, a_t) \leftarrow&(1 - \alpha) Q^\mathcal{C}(s_t, a_t)\\&\,\,\,\,\,\,\,\,+ \alpha (r + \gamma \mathop{\max_{\ a \in S_{\mathcal{C}}(s_{t+1})}} Q^\mathcal{C}(s_{t+1}, a)),
    \end{split}
\end{equation}
with learning rate $\alpha$. The optimal deterministic constrained policy $\pi^*$ can then be extracted by:
$$\pi^*(s_t) = \mathop{\argmax_{a \in S_{\mathcal{C}}(s_t) }} Q^\mathcal{C}(s_{t}, a).$$

The effect of the constrained Q-update can be seen in \Cref{fig:motivation}. In the given MDP, state $s_6$ is marked as unsafe and has to be avoided. Vanilla Q-learning without knowledge about this constraint leads to a policy choosing the upper path to $s_9$ with a reward of $+3$ (in our experiments, we further discuss the case of RS). A safety check at policy extraction can then be used to avoid this unsafe path, however at the point of decision it can only choose the path leading to $s_{10}$ with a non-optimal return of $+1$. Incorporating the constraint in the Q-update directly propagates the non-optimal value of the upper path back to $s_1$, such that Constrained Q-learning converges to the optimal constrained policy leading to $s_{11}$ with a return of $+2$.

\section{Multi-step Constraints}
\label{sec:ltc}
While common constraints are only dependent on the current decision step, it can be crucial to represent the effect of the current policy of the agent for a longer time scale. Typically, long-term dependencies on constraints are represented by the expected sum of discounted or average constraint signals $j_i$, i.e. $\mathcal{J}^\pi(s_t, a_t) = \mathbf{E}_{a_{i>t}\sim\pi, s_{i>t}\sim\mathcal{P}}[J(s_t)|a_t] = \mathbf{E}_{a_{i>t}\sim\pi, s_{i>t}\sim\mathcal{P}}[\sum_{i\geq t} \gamma^{i-t}j_i]$.\\

Instead, in our approach we propose to only consider the next $H$ steps: $\mathcal{J}^\pi_H(s_t, a_t) = \mathbf{E}_{a_{i>t}\sim\pi, s_{i>t}\sim\mathcal{P}}[J_H(s_t)] = \mathbf{E}_{a_{i>t}\sim\pi, s_{i>t}\sim\mathcal{P}}[\sum_{i\geq t}^{t + H} j_i]$. Since the values are guaranteed to be bounded due to the fixed horizon $H$, discounting is not needed, which also leads to more interpretable constraints. We apply the formulation of truncated value-functions defined in \cite{composite_q} to predict the truncated constraint-values.\\

We first estimate the immediate constraint-value and then follow a consecutive bootstrapping scheme to get to the estimation of the full horizon $H$. The update rules for constraint-values $\mathcal{J}^\pi_h$ in the tabular case are:
\begin{equation}
    \begin{split}
    \mathcal{J}^\pi_{1}(s_t, a_t)\leftarrow&(1-\alpha_{\mathcal{J}}) \mathcal{J}^\pi_1(s_t, a_t)+\alpha_{\mathcal{J}} j_t \text{ and}\\
    \mathcal{J}^\pi_{h>1}(s_t, a_t)\leftarrow&(1-{\alpha_{\mathcal{J}}}) \mathcal{J}^\pi_h(s_t, a_t)+\alpha_{\mathcal{J}} (j_t+  \\
    &\mathcal{J}^\pi_{h-1}(s_{t+1}, \argmax_{a\in S_{\mathcal{C}}(s_{t+1})} Q(s_{t+1}, a))),
    \end{split}
\end{equation}
with constraint-specific learning rate $\alpha_{\mathcal{J}}$.

\begin{theorem}
Given an MDP $\mathcal{M}$ and set of multi-step constraints $\mathcal{C}$, Constrained Policy Iteration (CPI) converges to the optimal deterministic policy $\pi^*$ for the induced constrained MDP $\mathcal{M_\mathcal{C}}$.
\end{theorem}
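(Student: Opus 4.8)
My plan is to prove the theorem by the classical policy-iteration argument, adapted so that every greedy step ranges only over the current safe action sets. First I would set up the \emph{constrained Bellman operators}. For a fixed deterministic feasible policy $\pi$ (i.e. $\pi(s)\in S_\mathcal{C}(s)$ for all $s$), let $(T^\pi Q)(s,a)=r(s,a)+\gamma\,\mathbf{E}_{s'\sim\mathcal{P}}[Q(s',\pi(s'))]$ and let the constrained optimality operator be $(T^\mathcal{C} Q)(s,a)=r(s,a)+\gamma\,\mathbf{E}_{s'\sim\mathcal{P}}[\max_{a'\in S_\mathcal{C}(s')}Q(s',a')]$. Since a maximum over a nonempty \emph{subset} of actions is still $1$-Lipschitz in $Q$, both operators are $\gamma$-contractions in the sup-norm; hence $T^\pi$ has the unique fixed point $Q^\pi$, and $T^\mathcal{C}$ has a unique fixed point, which I will call $Q^{*,\mathcal{C}}$. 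I would similarly observe that each truncated constraint-value $\mathcal{J}^\pi_h$, $h=1,\dots,H$, is the unique, exactly reached fixed point of the finite-horizon recursion in Eq.~(2): $\mathcal{J}^\pi_1$ is just the expected immediate cost, and $\mathcal{J}^\pi_h$ is obtained from $\mathcal{J}^\pi_{h-1}$ by one deterministic backup along the target policy, so no discounting is needed for well-posedness. Thus one full \emph{evaluation} sweep of CPI produces $Q^\pi$ together with $\mathcal{J}^\pi_H$, and the \emph{improvement} sweep forms $S_\mathcal{C}(s)$ from these quantities and sets $\pi'(s)=\argmax_{a\in S_\mathcal{C}(s)}Q^\pi(s,a)$.

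The heart of the proof is a constrained policy-improvement lemma: if $\pi$ is feasible then $\pi'$ is feasible and $Q^{\pi'}(s,a)\ge Q^\pi(s,a)$ pointwise. The inequality is the usual telescoping argument: because $\pi(s)$ itself lies in $S_\mathcal{C}(s)$, the constrained $\argmax$ can only do at least as well, so $Q^\pi=T^\pi Q^\pi\le T^{\pi'}Q^\pi$; iterating $T^{\pi'}$ and using its monotonicity and contraction gives $Q^\pi\le\lim_{n\to\infty}(T^{\pi'})^n Q^\pi=Q^{\pi'}$. Feasibility of $\pi'$ is immediate for single-step constraints, where membership in $S_\mathcal{C}(s)$ is decided by $c_k(s,\pi'(s))\le\beta_{c_k}$, which the $\argmax$ enforces by construction. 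I would then run the standard termination bookkeeping: with finite $\mathcal{S}$ and $\mathcal{A}$ there are finitely many deterministic policies, the sequence $Q^{\pi_0}\le Q^{\pi_1}\le\cdots$ is monotone and bounded, so CPI reaches a policy $\pi$ with $\pi=\pi'$; then $\pi(s)=\argmax_{a\in S_\mathcal{C}(s)}Q^\pi(s,a)$ for every $s$ forces $Q^\pi=T^\mathcal{C}Q^\pi$, hence $Q^\pi=Q^{*,\mathcal{C}}$ by uniqueness of the fixed point, and $\pi$ is the optimal deterministic safe policy $\pi^*$ (any other feasible $\mu$ satisfies $T^\mu Q^{*,\mathcal{C}}\le T^\mathcal{C}Q^{*,\mathcal{C}}=Q^{*,\mathcal{C}}$, so $Q^\mu\le Q^{*,\mathcal{C}}$ by the same monotone-operator comparison).

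The step I expect to be the real obstacle is feasibility preservation in the \emph{multi-step} case, because there $S_\mathcal{C}(s)$ depends on the policy through $\mathcal{J}^\pi_H$: the set that certifies $\pi$ as feasible is not the same set against which $\pi'$ must be rechecked, so changing the policy can in principle change which actions are labeled safe, and a naive policy-iteration proof would not rule out cycling. To close this I would exploit the self-referential definition of the constraint-value backups in Eq.~(2), whose target is precisely the constrained-greedy action $\argmax_{a\in S_\mathcal{C}(s_{t+1})}Q(s_{t+1},a)$; evaluating $\mathcal{J}$ against this target makes the pair $(Q^\pi,\mathcal{J}^\pi_H)$ and the induced safe set \emph{mutually consistent}, so that the policy produced by the improvement step is already feasible with respect to the constraint values under which it will be re-evaluated. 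Concretely I would cast CPI as a fixed-point iteration on this coupled $(Q,\mathcal{J})$ system, argue that monotone improvement survives because the constrained backup only ever deletes actions that the current consistent $\mathcal{J}^\pi_H$ certifies as violating, and then combine this with the finiteness/monotonicity argument above. If a fully general coupled argument proves too delicate, the fallback is to assume, as is natural in the driving application and implicit in the tabular analysis, that the safe sets remain nonempty along the iteration and that ties in the $\argmax$ are broken consistently, which already suffices to forbid cycling and hence forces convergence to $\pi^*$.
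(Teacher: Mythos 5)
Your proposal follows essentially the same route as the paper: constrained greedy improvement over the safe action set, the standard monotone-improvement chain $V^{\pi_k}_\mathcal{C}\le\cdots\le V^{\pi_{k+1}}_\mathcal{C}$, and optimality inherited from classical policy iteration, with your contraction and termination bookkeeping only making explicit what the paper leaves implicit by citing Puterman. The one substantive difference is how the multi-step case is treated. The obstacle you flag --- that $S_\mathcal{C}(s)$ depends on the policy through $\mathcal{J}^\pi_H$, so feasibility of $\pi_{k+1}$ is not automatic and cycling is not obviously excluded --- is real, but the paper does not close it with a coupled fixed-point argument either. It simply \emph{defines} the safe set at iteration $k$ relative to the current policy, $S^{\pi_k}_\mathcal{C}(s)=\left\{a\,\middle|\,\mathcal{J}^{\pi_k}_H(s|a)=0\right\}$, observes that $\pi_{k+1}(s)\in S^{\pi_k}_\mathcal{C}(s)$ holds by construction of the $\argmax$, and runs the entire improvement chain against $S^{\pi_k}_\mathcal{C}$; whether $\pi_{k+1}$ remains feasible with respect to its own constraint values $\mathcal{J}^{\pi_{k+1}}_H$ is not addressed. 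So your instinct that this is the delicate step is correct, and your fallback (nonempty safe sets along the iteration, consistent tie-breaking) is roughly the level of rigor at which the paper itself operates; the coupled $(Q,\mathcal{J})$ fixed-point argument you sketch would, if carried out, be strictly stronger than what the paper actually proves.
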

\begin{proof}
Given a set of constraints $\mathcal{C}$ and the maximum horizon $H$ of all constraints, we can define the truncated constraint violation function $\mathcal{J}^{\pi_k}_H$ of horizon $H$ by
$\mathcal{J}^{\pi_k}_1(s)=\mathds{1}_{\pi_k(s)\not\in S_{\mathcal{C}}(s)}$ and
$\mathcal{J}^{\pi_k}_{h>1}(s)=\sum_{s^\prime}p(s^\prime|s,\pi_k(s))(\mathds{1}_{\pi_k(s)\not\in S_{\mathcal{C}}(s)}+\mathcal{J}^{\pi_k}_{h-1}(s^\prime)).$ Thus, $\mathcal{J}^{\pi_k}_H(s)$ represents the amount of constraint violations within horizon $H$ when following the current policy $\pi_k$. We can then define the complete safe set $S^{\pi_k}_{\mathcal{C}}(s)$ for state $s$ under policy $\pi_k$ at iteration $k$ by $S^{\pi_k}_{\mathcal{C}}(s)=\left\{a\middle|\mathcal{J}^{\pi_k}_H(s|a)=0\right\}$. At policy improvement, the policy is updated by: $$\pi_{k+1}(s)\leftarrow \argmax\limits_{a\in S^{\pi_k}_{\mathcal{C}}(s)}\sum\limits_{s^\prime}p(s^\prime|s,a)\left(r(s,a)+\gamma V_{\mathcal{C}}^{\pi_k}(s^\prime)\right).$$ Therefore, by definition $\pi_{k+1}(s)\in S^{\pi_k}_{\mathcal{C}}(s).$ The monotonic improvement of Policy Iteration (PI) holds for CPI w.r.t. the constrained value-function $V_\mathcal{C}^\pi(s)$:
\begin{align*}
    V_\mathcal{C}^{\pi_k}(s)&\leq \max_{a\in S^{\pi_k}_{\mathcal{C}}(s)} Q_\mathcal{C}^{\pi_k}(s,a)\\
    &=\max_{a\in S^{\pi_k}_{\mathcal{C}}(s)}r(s,a)+\gamma\sum\limits_{s^\prime}p(s^\prime|s,a)V_\mathcal{C}^{\pi_k}(s^\prime)\\
    &=r(s,\pi_{k+1}(s))+\gamma\sum\limits_{s^\prime}p(s^\prime|s,\pi_{k+1}(s))V_\mathcal{C}^{\pi_k}(s^\prime)\\
    &\leq r(s,\pi_{k+1}(s))+\\
    &\qquad\gamma\sum\limits_{s^\prime}p(s^\prime|s,\pi_{k+1}(s))\max_{a\in S^{\pi_k}_{\mathcal{C}}(s)}Q_\mathcal{C}^{\pi_k}(s^\prime,a)\\
    &=V_\mathcal{C}^{\pi_{k+1}}(s)
\end{align*}
Optimality then follows from the optimality of PI \cite{DBLP:books/wi/Puterman94}.
\end{proof}
In the following, we focus on the empirical evaluation of CQL in the tabular and function approximation settings and leave further theoretical analysis as future work.

\section{Deep Constrained Q-learning}
\label{sec:dltcq}

In order to employ Constrained Q-learning within DQN, the target has to be modified to $y^Q_i=r_i + \gamma \mathop{\max_{\ a \in S_{\mathcal{C}}(s_{i+1}) }} Q'(s_{i+1}, a|\theta^{Q'})$,
where $Q'$ is the target-network, parameterized by $\theta^{Q'}$\hspace*{-0.075cm}. Parameters $\theta^Q$ are then updated on the mean squared error and the parameters of the target network by Polyak averaging. We refer to this algorithm as \textit{Constrained DQN} (CDQN). A general description is shown in \Cref{alg:ltdqn}.

\begin{algorithm}[h]
\small
    \SetAlgoLined
    \DontPrintSemicolon
    initialize $Q$ and $Q'$ and set replay buffer $\mathcal{R}$\\
    \For{\text{optimization step} o=1,2,\dots}{
        sample minibatch $(s_i, a_i, s_{i+1}, r_i)_{1\leq i\leq n}$ from $\mathcal{R}$\\
        
        calculate safe sets $S_{\mathcal{C}}(s_{i+1})_{1\leq i\leq n}$ for all constraints\\
        set $y^Q_i=r_i + \gamma \mathop{\max_{\ a \in S_{\mathcal{C}}(s_{i+1}) }} Q'(s_{i+1}, a|\theta^{Q'})$\\
        minimize MSE of $y^Q_i$ and $Q(s_i, a_i|\theta^Q)$\\
        update target network $Q'$\\
    }
    \For{\text{execution step} e=1,2,\dots}{
        get current state $s_t$ from environment\\
        calculate safe set $S_{\mathcal{C}}(s_{t})$ for all constraints\\
        apply $\pi(s_t) = \argmax_{a\in S_\mathcal{C}(s_t)}Q(s_t, a)$ 
      }
    \caption{(Fixed Batch) CDQN}
    \label{alg:ltdqn}
\end{algorithm}

\section{Deep Multi-step Constraints}
\label{sec:dmsc}
To cope with infinite state-spaces, we jointly estimate $\mathcal{J}^{\pi}_h|_{1\leq h\leq H}$ with function approximator $\mathcal{J}(\cdot, \cdot|\theta^\mathcal{J})$, parameterized by $\theta^\mathcal{J}$. Based on consecutive bootstrapping, the targets are given by $y^\mathcal{J}_{t,1}=j_t$ and $y^\mathcal{J}_{t,h>1}=j_t + \mathcal{J}^\prime_{h-1}(s_{t+1}, \argmax_{a \in S_{\mathcal{C}}(s_{t+1})} Q(s_{t+1}, a|\theta^Q)|\theta^{\mathcal{J}^\prime_{h-1}})$,
 where $\mathcal{J}^\prime_\cdot$ represent target networks with parameters $\theta^{\mathcal{J}^\prime_{\cdot}}$ (cf. \Cref{alg:ltdqnmsc}). We update $\theta^{\mathcal{J}^\prime_{\cdot}}$ by Polyak averaging.
 \begin{algorithm}[h]
\small
    \SetAlgoLined
    \DontPrintSemicolon
    initialize $Q$ and $Q'$ and set replay buffer $\mathcal{R}$\\
    initialize all multi-step constraints $\mathcal{J}$ and $\mathcal{J}'$\\ 
    \For{\text{optimization step} o=1,2,\dots}{
        sample minibatch $(s_i, a_i, s_{i+1}, r_i)_{1\leq i\leq n}$ from $\mathcal{R}$\\
        
        calculate safe sets $S_{\mathcal{C}}(s_{i+1})_{1\leq i\leq n}$ for all constraints\\
        set $y^Q_i=r_i + \gamma \mathop{\max_{\ a \in S_{\mathcal{C}}(s_{i+1}) }} Q'(s_{i+1}, a|\theta^{Q'})$\\
        minimize MSE of $y^Q_i$ and $Q(s_i, a_i|\theta^Q)$\\
        update target network $Q'$\\
        
        \ForEach{\text{multi-step constraint} $\mathcal{J}$}{
            set multi-step constraint targets $y^{\mathcal{J}}_{i,h\leq H}$\\
            minimize MSE of $y^{\mathcal{J}}_{i,h\leq H}$ and $\mathcal{J}_{h\leq H}(s_i, a_i|\theta^{\mathcal{J}})$ \\
            update target networks ${\mathcal{J}}'$\\
        }
    }
    \For{\text{execution step} e=1,2,\dots}{
        get current state $s_t$ from environment\\
        calculate safe set $S_{\mathcal{C}}(s_{t})$ for all constraints\\
        apply $\pi(s_t) = \argmax_{a\in S_\mathcal{C}(s_t)}Q(s_t, a)$ 
      }
    \caption{CDQN with Multi-step Constraints}
    \label{alg:ltdqnmsc}
\end{algorithm}

\section{Experiments}
In order to analyze the effect of Constrained Q-learning, we first evaluate Constrained Q-learning in the tabular setting, before we compare CDQN to reward shaping and loss-penalties for the more complex task of high-level decision making for autonomous lane changes.

\section{Tree MDPs}
We apply tabular Constrained Q-learning to the MDP-class described in \Cref{fig:tabmdpclass}, an extension of the motivational example in \Cref{fig:motivation} with an arbitrary number of $B$ distracting paths, subsequently called branches. All transitions except for terminal ones have an immediate reward of 0. The return of each path is indicated at the right. All upper paths yielding a higher return than the safe path at the bottom lead through some unsafe state which has to be avoided (e.g. by safety requirements), except for the second lowest path which on the other hand yields the lowest return. Thus, it is optimal to pick the lowest path at the decision step in state $s_1$.\\
\begin{figure}[h]
  \centering  
  \resizebox{0.41\textwidth}{!}{%
    \begin{tikzpicture}[->,>=stealth',shorten >=1pt,auto, node distance=1.5cm, semithick]
        \tikzstyle{every state}=[]
        
        \node[state] (0) {};
        \node[state] (1) [right of=0, red] {\Huge$s_1$};
        \node[state] (2) [right of=1] {};
        \node[state] (3) [right of=2] {};
        \node[state] (4) [right of=3, dashed] {};
        \node[state] (5) [right of=4] {};
        \node[state] (6) [right of=5] {};
        \node[state] (7) [right of=6] {};
        \node[] (8) [right of=7] {\Huge\dots};
        \node[state] (9) [right of=8] {};
        \node[state] (10) [right of=9, fill=black] {};
        \node[state] (11) [below of=4] {};
        \node[state] (12) [right of=11] {};
        \node[state] (13) [right of=12, dashed] {};
        \node[state] (14) [right of=13] {};
        \node (15) [right of=14] {\Huge\dots};
        \node[state] (16) [right of=15] {};
        \node[state] (17) [right of=16,fill=black] {};
        \node (18) [below of=13, rotate=315] {\Huge\dots};
        
        \node[state] (19) [below of=17, fill=black] {};
        \node[state] (20) [left of=19] {};
        \node (21) at ($(18)!0.5!(20)$) {\Huge\dots};
        
        \node[state] (29) [below of=20] {};
        \node[state] (30) [below of=19, fill=black] {};
        
        \node (31) at (18|-30) [rotate=315, yshift=1.0cm, xshift=0.5cm] {\Huge\dots};
        
        \node (22) at (18|-30) [rotate=315, yshift=1.0cm, xshift=1cm] {};
        \node (23) at (2|-22) {};
        \node[state] (24) [below of=23] {};
        \node[state] (25) [right of=24] {};
        \node[state] (26) at (25-|19) [fill=black] {};
        \node[state] (27) [left of=26] {};
        \node (28) at ($(25)!0.5!(27)$) {\Huge\dots};
        
        \node (32) [right of=10, xshift=0.7cm] {\Huge $+(B+2)$};
        \node (33) [right of=17, xshift=0.7cm] {\Huge $+(B+1)$};
        \node (34) [right of=19, xshift=0.7cm] {\Huge $+3$};
        \node (35) [right of=30, xshift=0.7cm] {\Huge $+1$};
        \node (36) [right of=26, xshift=0.7cm] {\Huge $+2$};
        \node (37) at ($(33)!0.5!(34)$) {\Huge $\vdots$};
        
        \path (0) edge [] node {} (1)
              (1) edge [red] node {} (2)
              (2) edge [] node {} (3)
              (3) edge [] node {} (4)
              (4) edge [] node {} (5)
              (5) edge [] node {} (6)
              (6) edge [] node {} (7)
              (7) edge [] node {} (8)
              (8) edge [] node {} (9)
              (9) edge [] node {} (10)
              (3) edge [] node {} (11)
              (11) edge [] node {} (12)
              (12) edge [] node {} (13)
              (13) edge [] node {} (14)
              (14) edge [-] node {} (15)
              (15) edge [] node {} (16)
              (16) edge [] node {} (17)
              (12) edge [-] node {} (18)
              (18) edge [-] node {} (21)
              (21) edge [] node {} (20)
              (20) edge [] node {} (19)
              (1) edge [red] node {} (24)
              (24) edge [] node {} (25)
              (25) edge [-] node {} (28)
              (28) edge [] node {} (27)
              (27) edge [] node {} (26)
              (18) edge [-] node {} (31)
              (22) edge [] node {} (29)
              (29) edge [] node {} (30);
              
    \end{tikzpicture}
}

\long\def\/*#1*/{}

\/*
\resizebox{0.5\textwidth}{!}{%
\begin{tikzpicture}[->,>=stealth',shorten >=1pt,auto,node distance=2.8cm,
                    semithick]
  \tikzstyle{every state}=[]

  \node[state] (0) {};
  
  \node[state] (1) [right=0.5cm of 0] {};
  \node       (d1) [above=0.5cm of 1] {};
  \node       (d2) [below=0.5cm of 1] {};
  
  \node       (d3) [right=0.5cm of 1] {};
  \node[state] (2) [right=0.5cm of d1] {};
  \node[state] (3) [right=0.5cm of d2] {};

  \node       (d4) [right=0.5cm of d3] {};
  \node[state] (4) [right=0.5cm of 2] {};
  \node[state] (5) [right=0.5cm of 3] {};

  \node[state] (7) [right=0.5cm of d4] {};
  \node[state] (6) [right=0.5cm of 4, dashed] {};
  \node[state] (8) [right=0.5cm of 5] {};

  \node[state] (9) [right=0.5cm of 6] {};
  \node[state] (10) [right=0.5cm of 7] {};
  \node[state] (11) [right=0.5cm of 8] {};

  \path (0) edge [below, very thick] node {} (1)
        (1) edge [very thick, sloped, anchor=center, below] node {} (2)
        (1) edge [very thick, sloped, anchor=center, below] node {} (3)
        (2) edge [very thick, below] node {} (4)
        (4) edge [very thick] node {} (6)
        (6) edge [very thick] node {} (9)
        (4) edge [very thick, sloped, anchor=center, below] node {} (7)
        (7) edge [very thick, below] node {} (10)
        (3) edge [very thick, below] node {} (5)
        (5) edge [very thick, below] node {} (8)
        (8) edge [very thick, below] node {} (11);
\end{tikzpicture}
}*/
  \caption{MDP class considered in the tabular experiments. Dashed states are marked as unsafe, terminal states are black. All transitions have an immediate reward of $+0$, terminal rewards are given to the right. The optimal path is the one at the bottom with a return of $+2$, as the only safe path above yields a return of $+1$.}
  \label{fig:tabmdpclass}
\end{figure}

Without reward shaping, vanilla Q-learning would pick the path at the top, since it yields the highest return. The strongest learning signal which can be included to augment the reward function is to give an immediate reward of $-\infty$ if an unsafe state is reached in a given transition (in the function approximation setting, this can be infeasible due to stability issues caused by the variance increase in the reward signal). Hence, we compare Constrained Q-learning to shaped Q-learning with this augmented reward. Results can be seen in \Cref{fig:tabresults}. Compared to vanilla Q-learning, Constrained Q-learning needs $25\%$ less samples until convergence to the optimal policy in case of 1 branch and up to $90\%$ for 10 branches.\\
\begin{figure}[h]
    \centering
    \includegraphics[width=0.44\textwidth]{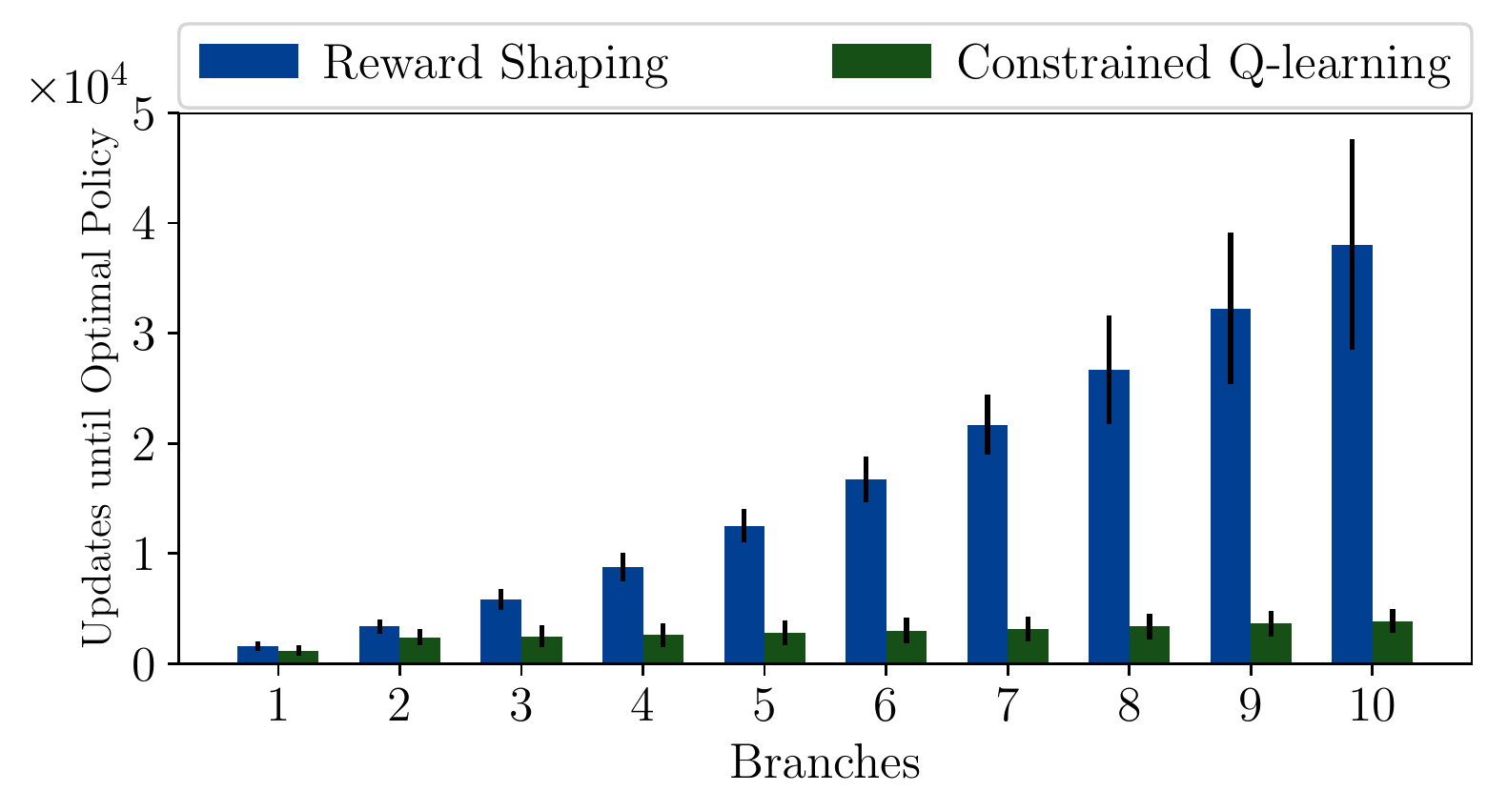} 
    \caption{Results of CQL and RS for the MDP class shown in \Cref{fig:tabmdpclass} for $1$ to $10$ branches.}
    \label{fig:tabresults}
\end{figure}

These results demonstrate the effect of action-space shaping, i.e. the underlying mechanism of Constrained Q-learning, compared to reward shaping. The incorporation of prior knowledge at maximization in the Q-update restricts the search space over Q-functions which simplifies the learning problem. In the following, we further underline this finding in the function approximation setting.

\section{High-Level Decision Making for Autonomous Driving}
We evaluate Constrained DQN on the task of learning autonomous lane changes in the open-source simulator SUMO. We build upon the MDP formulation in \cite{DeepSetQ} and further use the same settings for SUMO. However, we change the value of \textit{lcKeepRight} to be in $\{5,8,10\}$ for the meta-configurations of the driver types, in order to enforce the drivers to keep right. For all experiments, we use the network architecture from  DeepSet-Q \cite{DeepSetQ} to deal with a variable number of surrounding vehicles. Since it is infeasible to train such a deep RL system online due to the high demand for executed actions, we train all methods on the same fixed batch of $5\cdot10^5$ transitions collected by an exploratory policy with random sampling for $2.5\cdot10^6$ gradient steps. We then further employ CDQN on the real HighD data set \cite{highDdataset}, which instead contains transitions collected by real human drivers. The discrete action space $\mathcal{A}$ includes three actions: \emph{keep lane}, \emph{perform left lane change} and \emph{perform right lane change}. Acceleration and maintaining safe-distance to the preceding vehicle on the same lane are controlled by a low-level SUMO controller. The primary objective is to drive as close as possible to a desired velocity. Thus, we define the reward function $r: \mathcal{S}\times\mathcal{A}\mapsto \mathbb{R}$ as $r(s, a) = r_\text{speed}(s, a) = 1 - \frac{|v_{\text{current}}(s)- v_{\text{desired}}(s)|}{v_{\text{desired}}(s)},$ where $v_{\text{current}}$ and $v_{\text{desired}}$ are the actual and desired velocity of the agent. In this work, we focus on the combination of two single-step and one multi-step constraint:
\paragraph{Safety} To guarantee safe lane changes, we use a constant velocity planner to predict future outcomes based on the \textit{Intelligent Driver Model} \cite{treiber2000congested} which ensures a desired time headway to preceding vehicles. We formulate the constraint signal as $c_\text{safe}(s,a) = \mathds{1}_{a \text{ is not safe}}$. Additionally, we restrict lane changes on the outermost lanes (it is not allowed to drive outside the lanes) by using a second constraint signal $c_{\text{lane}}(s, a) = \mathds{1}_{l_\text{next} < 0}+\mathds{1}_{l_\text{next} \geq \text{num lanes}}$. The safe set of the safety constraint can then be formulated as $S_{\text{safety}}(s, a) = \{ a \in \mathcal{A} | c_\text{safe}(s,a) + c_\text{lane}(s,a) \le 0 \}$. Driving straight is always safe. In case of contradicting constraints we give safety higher priority.

\paragraph{Keep-Right}
As second constraint, we add a keep-right rule. The agent ought to drive right when there is a gap of at least $t_\text{gap}$ (we set $t_\text{gap}$ to $\SI{10}{s}$ in our experiments) on the same lane and on the lane right to the agent assuming driving with the desired velocity before the closest leader is reached. A corresponding rule is part of the traffic regulations in Germany. We can then formulate the constraint signal as $c_{\text{r}}(s, a) = \mathds{1}_{a\neq\text{right and }\Delta t_\text{right} > t_\text{gap} \text{ and }  \Delta t_\text{same} > t_\text{gap}}$, where $\Delta t$ is the true gap time span. Additionally, the agent is not allowed to leave its current lane, if there is no leader on the same lane or one lane to the left, i.e. $c_\text{l}(s, a) = \mathds{1}_{a=\text{left and }\Delta t_\text{left} > t_\text{gap} \text{ and }\Delta t_\text{same} > t_\text{gap}}$. The safe set thus becomes $S_{\text{KR}}(s, a) = \{ a \in \mathcal{A} | c_\text{r}(s,a) + c_\text{l}(s,a) \le 0 \}$, where \textit{KR} abbreviates \textit{Keep-Right}.

\paragraph{Comfort}
In order to guarantee comfort, we approximate a multi-step prediction of lane changes based on our target-policy in a model-free manner. We set the immediate lane change value $j_t$ to 1, if the agent performs a lane change and 0 otherwise. Within the defined time span, a maximum of $\beta_{\text{LCmax}}$ lane changes are allowed. We calculate the amount of lane changes over $H=5$ (10s) by using $\mathcal{J}_{5}^\pi$, i.e. the safe set can be defined by $S_{\text{LCmax}}(s, a) = \{ a \in \mathcal{A} | \mathcal{J}_{5}^\pi(s,a) \le \beta_{\text{LCmax}} \}$. In our experiments, we set $\beta_{\text{LCmax}}=2$. Lowering the threshold for a fixed horizon results in a more conservative behaviour, since less lane changes are allowed. Increasing the threshold adds flexibility to the behaviour of the agent, however, it will most probably lead to more lane changes. The same holds for a fixed threshold and varying horizon analogously. A hard constraint on the number of lane-changes can be avoided by an alternative formulation of the comfort multi-step constraint, where  optional lane-changes are performed only if the expected velocity increases by a certain amount in a certain time. We define the immediate gain as $j_t = v_{t+1} - v_{t}$ and the corresponding safe set as $S_{\text{VGmin}}(s, a) = \{ a \in \mathcal{A} | \mathcal{J}_{5}^\pi(s,a) \ge \beta_{\text{VGmin}}\}$. We only allow additional lane-changes, if the velocity gain over $H=5$ exceeds $\beta_{\text{VGmin}}=\SI{0.25}{\meter\per\second}$. We estimate multi-step constraint-values and Q-values in one architecture (using multiple output heads in the last layer) to speed up training. The optimized architecture is shown in \Cref{fig:architecture}.

\begin{figure}[h]
\centering
  \resizebox{0.44\textwidth}{!}{%
\begin{tikzpicture}
\tikzset{line width=1pt}

\tikzstyle{box_dashed}=[rectangle, draw, rounded corners, dashed]
\tikzstyle{box_dotted}=[rectangle, draw, rounded corners, dotted]
\tikzstyle{element}=[draw, fill=black!10, text centered] 
\tikzstyle{element_rect}=[element, rounded corners, rectangle, minimum height = 0.8cm, minimum width = 1.5cm]
\tikzstyle{element_ell}=[element, ellipse]
\tikzstyle{element_circ}=[element, circle]
\tikzstyle{element_dia}=[element, diamond]
\tikzstyle{element_rect_split}=[element, rounded corners, rectangle split, rectangle split horizontal, rectangle split parts=2, rectangle split draw splits=false]

\node (layer1) [element_rect, minimum width = 2cm,align=center, minimum height = 1.3cm] {Deep Set\\Input\\$\phi$: FC(20), FC(80)\\$\rho$: FC(80), FC(20)};
\node (layer3) [element_rect, minimum width = 2cm, right=0.7cm of layer1,align=center, minimum height = 1.3cm] {2 FC(100)\\layers};
\node (layer5) [element_rect_split, right=0.7cm of layer3, minimum height = 1cm] {\nodepart[text width=1cm]{one}$Q^\text{straight}$\nodepart[text width=2.6cm]{two}$\mathcal{J}^\text{straight}_H \dots \mathcal{J}^\text{straight}_{1}$};
\node (layer4) [element_rect_split, above=0.7cm of layer5, minimum height = 1cm] {\nodepart[text width=1cm]{one}$Q^\text{left}$\nodepart[text width=2.6cm]{two}$\mathcal{J}^\text{left}_H \dots \mathcal{J}^\text{left}_{1}$};
\node (layer6) [element_rect_split, below=0.7cm of layer5, minimum height = 1cm] {\nodepart[text width=1cm]{one}$Q^\text{right}$\nodepart[text width=2.6cm]{two}$\mathcal{J}^\text{right}_H \dots \mathcal{J}^\text{right}_{1}$};

\draw [] (layer1.east) edge[->, thick, out=0, in=180] (layer3.west);
\draw [] (layer3.east) edge[->, thick, out=0, in=180] (layer4.west);
\draw [] (layer3.east) edge[->, thick, out=0, in=180] (layer5.west);
\draw [] (layer3.east) edge[->, thick, out=0, in=180] (layer6.west);



\end{tikzpicture}
}
  \caption{Architecture of Constrained DQN analogous to \cite{DeepSetQ} with modified output to jointly estimate Q- and comfort-values.}
  \label{fig:architecture}
\end{figure}
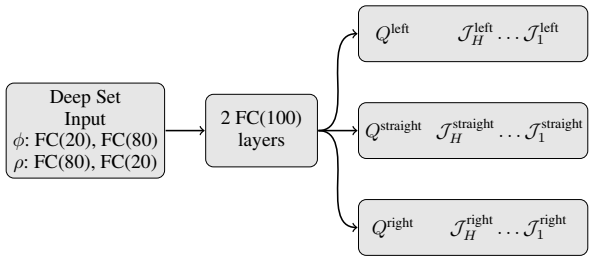

 More constraints can be easily added in the same manner. To highlight the advantages of CDQN, we compare to the following baselines:
\paragraph{Safe Policy Extraction (SPE)} In this baseline, we check for constraints only at policy extraction.

\paragraph{Reward Shaping} We compare to a reward shaping approach, where we add weighted penalties for lane changes and for not driving on the right lane, i.e.:
$r(s, a) = r_\text{speed}(s,a) - \lambda_\text{LC}p_\text{LC} - \lambda_\text{KR}p_\text{KR}$. We set $p_{\text{LC}} = 1$ if action $a$ is a lane change and $0$ otherwise. Further, we set $p_\text{KR} = l_\text{current}$ for the current lane index  $l_\text{current}$, where lane index 0 is the right most lane.

\paragraph{Additional Loss Terms}
As an alternative, we  approximate the solution of our constrained MDP using the reward $r_\text{speed}(s,a)$ by  including constraint penalties in the loss. We penalize the objective for constraint violations, solving the constrained surrogate:  
\vspace*{-0.1cm}
\begin{align*}
L(\theta^Q) =& \frac{1}{n}\sum\limits_{i=1}^n (y_i - Q(s_i, a_i|\theta^Q))^2  + ( \lambda_\text{safe}\mathds{1}_{a_i\not\in S_{\text{safe}}} \\
& +\lambda_\text{KR} \mathds{1}_{a_i\not\in S_{\text{KR}}}
+\lambda_\text{comfort} \mathds{1}_{a_i\not\in S_{\text{comfort}}} ) Q(s_i, a_i | \theta^Q)^2
\end{align*}

We multiply the constraint masks by the squared Q-values to penalize constrained violations according to their value.\\

\begin{figure}[t]
\vspace*{0.05cm}
    \centering
   \includegraphics[width=0.43\textwidth]{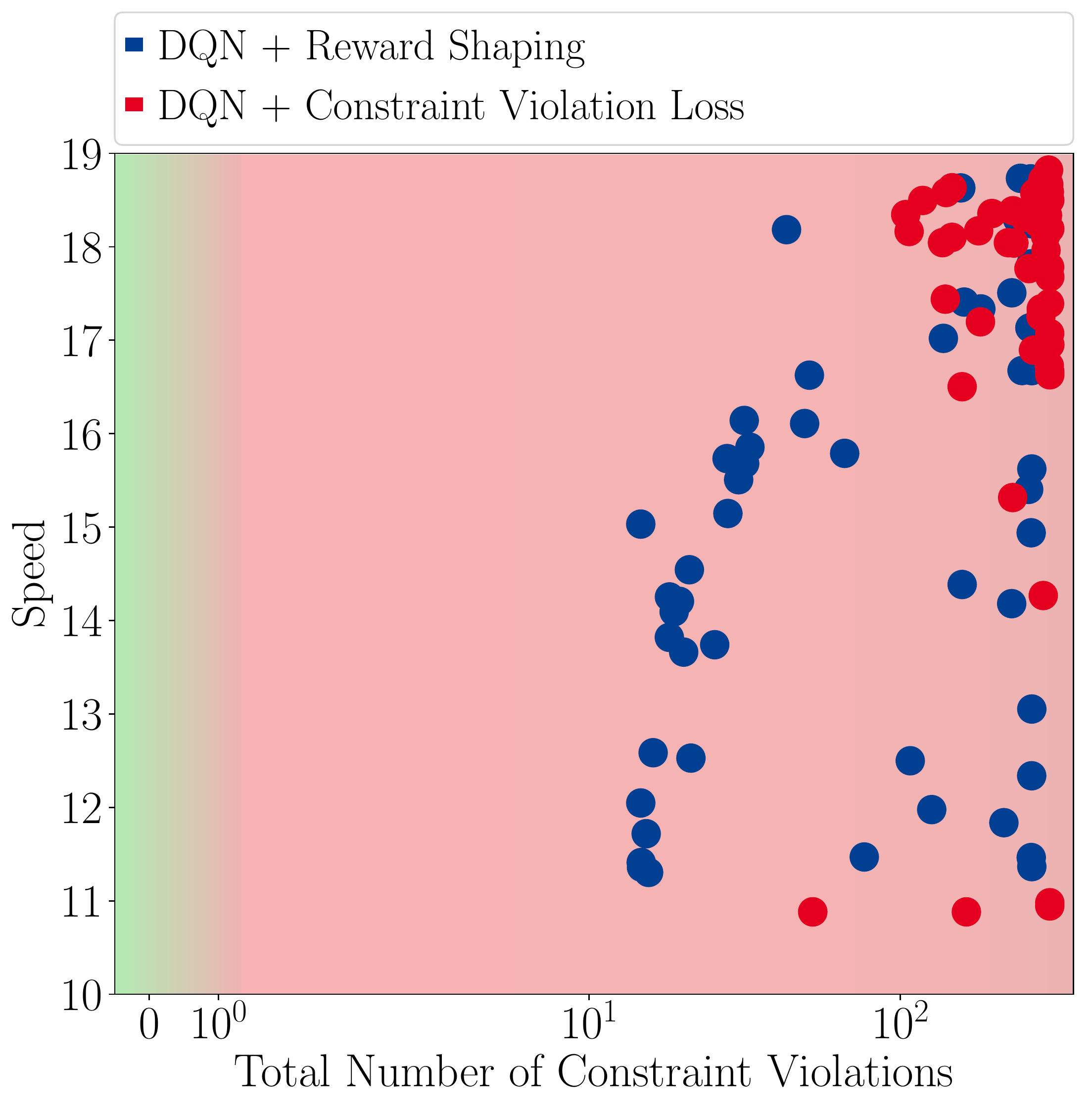}
 
    \caption{Mean performance of 10 training runs for scenarios with 20 to 80 vehicles. The average speed is shown on the y-axis and average number of comfort and KR constraint violations on the x-axis. Every point corresponds to one of 50 sampled configurations by random search for different (blue) reward shaping weights and (red) loss penalty weights.}
    \label{fig:revio}
\end{figure}
\begin{figure}[t]
    \centering
    \includegraphics[width=0.43\textwidth]{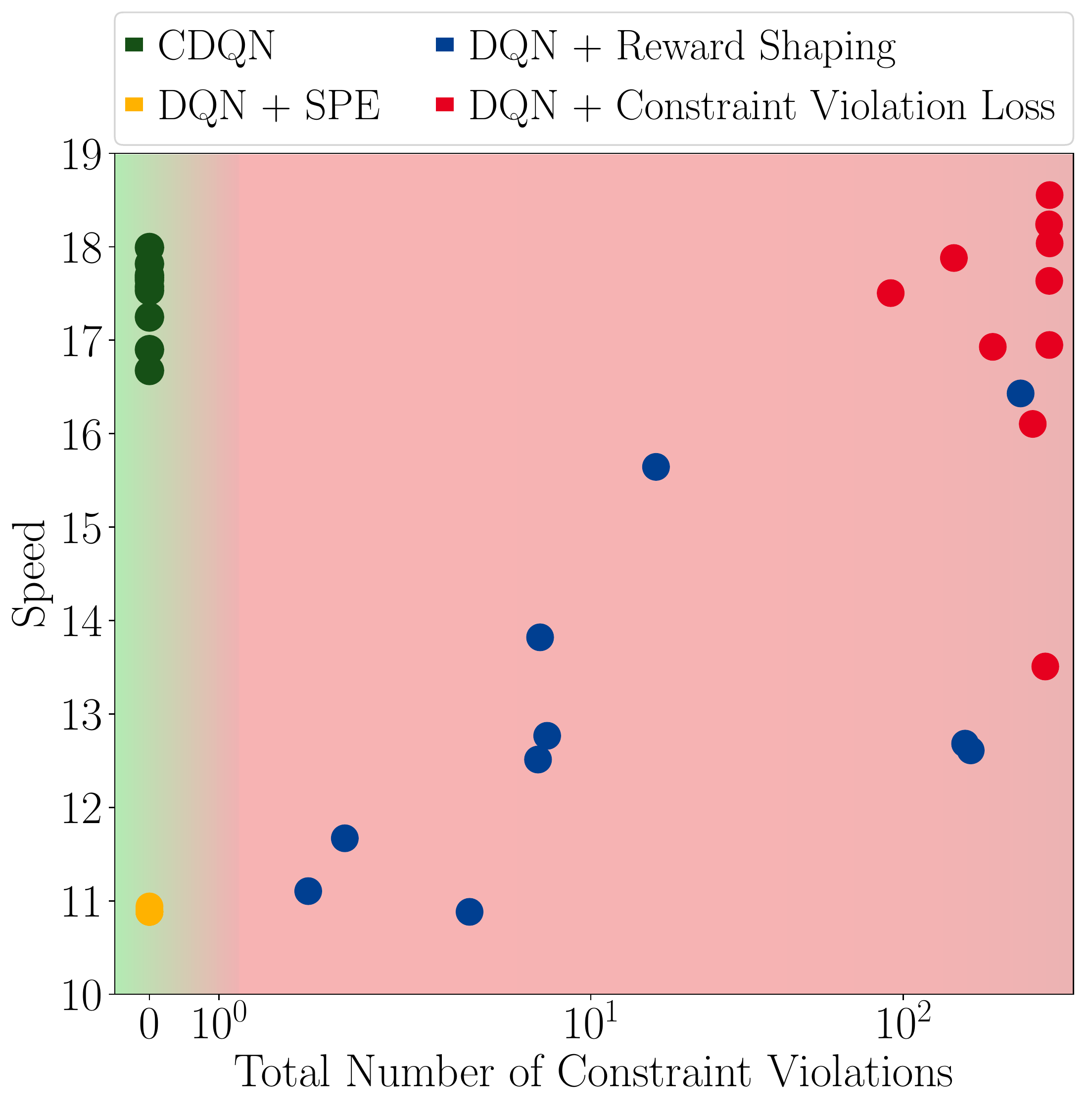}
    \caption{Mean performance of 10 training runs for scenarios with 20 to 80 vehicles. The average speed is shown on the y-axis and average number of comfort and KR constraint violations on the x-axis. \MTSCDQN  is compared to DQN with reward shaping, constraint violation loss and SPE.}
    \label{fig:longterm}
\end{figure}

The penalty-weights for the baselines were optimized with random search using a fixed budget of $1.25\cdot10^6$ gradient steps due to computational costs. In total, we sampled 50 configurations for each baseline. The results of the random search are shown in \Cref{fig:revio}, which indicates the total amount of comfort and \textit{KR} constraint violations and the speed for each configuration. The safety constraint was enabled for policy extraction in both methods, hence there were no safety constraint violations. Please note, however, that only the action-value in CQL represents a safe policy optimal w.r.t. the \emph{long-term} return. As incumbent, we chose the configuration with the lowest number of violations for which the policy did not collapse to only driving straight. For both methods, configurations show either high speed in combination with a high number of constraint violations, or they violate a low number of constraints but are quite slow. This underlines the difficulty of finding proper settings in both reward shaping and Lagrangian methods.
\subsection{Real Data}
In order to evaluate the real-world applicability of our approach, we generate a transition set from the open HighD data set \cite{highDdataset}, containing 147 hours of top-down recordings of German highways. The data set includes features for the different vehicles, such as a distinct ID, velocity, lane and position. We discretize \SI{5}{\second} before and after occurrences of lane changes with a step size of \SI{2}{\second}, leading to a consecutive chain of 5 time steps with one lane change per chain. The acting vehicle is then considered as the current agent. In total, this results in a replay buffer of $\sim20000$ transitions with $\sim5000$ lane changes.

\section{Results}

The results for agents considering the three defined constraints can be found in \Cref{fig:longterm}. \MTSCDQN is the only agent satisfying all constraints in every time step while taking most advantage of the maximum allowed number of lane changes, showing high speed and the lowest variance. All other agents are not able to drive close to the desired velocity or cause a tremendous amount of constraint violations. The results of reward shaping and Lagrangian optimization suffer from high variance and are not consistent. The worst performance is shown by the SPE agent,
staying on the initial lane with no applied lane changes over all training runs. Thus, \MTSCDQN is by far the best performing agent, driving comfortable and fast without any violations. Both comfort constraint formulations led to an equivalent behavior (therefore, additional results not shown). A comparison of DQN with SPE and \MTSCDQN trained in simulation to \MTSCDQN trained on the open HighD data set \cite{highDdataset} is shown in \Cref{fig:highdres}. While there is a larger difference in performance for simulated scenarios with 50 and more vehicles, the agents trained in simulation and the real data perform equivalently for scenarios with 20 to 40 vehicles. Furthermore, \MTSCDQN trained on real data outperforms DQN with SPE trained directly in simulation, which is not capable to solve the task adequately while satisfying all constraints in all time steps.  The learned policy of \MTSCDQN  generalizes to new scenarios and settings, even with mismatches between simulation and the real recordings.

\begin{figure}[t]
    \centering
   \includegraphics[width=0.43\textwidth]{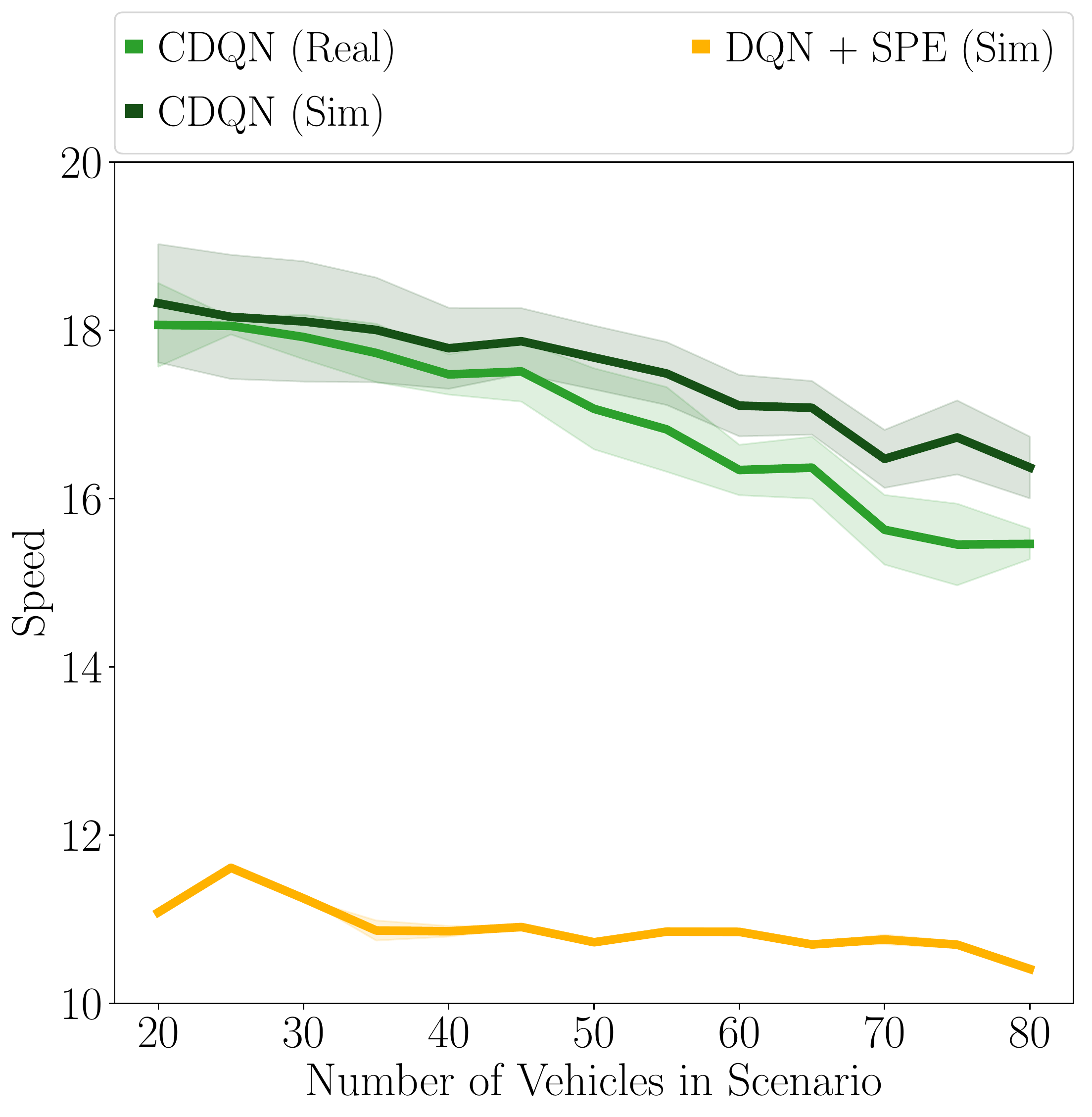}
 
    \caption{Mean performance and standard deviation of CDQN trained in simulation (Sim) and the real HighD data set \cite{highDdataset} (Real). Evaluated in simulation for scenarios with 20 to 80 vehicles. Further comparison to DQN with Safe Policy Extraction trained and evaluated in simulation.}
    \label{fig:highdres}
\end{figure}

\section{Conclusion}
We introduced Constrained Q-learning, an approach to incorporate hard constraints directly in the Q-update to find the optimal deterministic policy for the induced constrained MDP. For its formulation, we define a new class of multi-step constraints based on truncated value-functions. In the tabular setting, Constrained Q-learning proved to be 10 times more sample-efficient than reward shaping which underlines the finding that action-space shaping imposes an easier learning problem compared to reward shaping. In high-level decision making for autonomous driving, \MTSCDQN is outperforming reward shaping, Lagrangian optimization and Safe Policy Extraction in terms of final performance with orders of magnitude less constraint violations, while offering more interpretable constraint formulations by avoiding the need for discounting. \MTSCDQN can learn an optimal safe policy directly from real transitions without the need of simulated environments, which is a major step towards the application to real systems.

\bibliography{constrained_q}

\end{document}